\documentclass{article}
\pdfoutput=1

\PassOptionsToPackage{numbers, compress}{natbib}


    \PassOptionsToPackage{numbers, compress}{natbib}

\usepackage{amsmath,amsfonts,bm}









\def\eqref#1{equation~\ref{#1}}









\def\1{\bm{1}}










\DeclareMathAlphabet{\mathsfit}{\encodingdefault}{\sfdefault}{m}{sl}
\SetMathAlphabet{\mathsfit}{bold}{\encodingdefault}{\sfdefault}{bx}{n}




\def\sR{{\mathbb{R}}}








\newcommand{\R}{\mathbb{R}}

\newcommand{\KL}{{\mathrm{KL}}}
\newcommand{\FD}{{\mathrm{FD}}}
\newcommand{\MFD}{\widetilde{\mathrm{FD}}}

\newcommand{\X}{\mathcal{X}}

\DeclareMathOperator{\Normal}{\mathcal{N}}

\newcommand{\tp}{\tilde{p}}
\newcommand{\tq}{\tilde{q}}


\newcommand{\norm}[1]{\left\lVert#1\right\rVert}


\DeclareMathOperator{\Tr}{Tr}

\renewcommand{\MFD}{\mathrm{MFD}}
\usepackage{centernot}


    \usepackage[final]{score_workshop}


\usepackage[utf8]{inputenc} 
\usepackage[T1]{fontenc}    
\usepackage{hyperref}       
\usepackage{url}            
\usepackage{booktabs}       
\usepackage{amsfonts}       
\usepackage{nicefrac}       
\usepackage{microtype}      
\usepackage{xcolor}         
\usepackage{float}
\usepackage{amsmath}
\usepackage{amsthm}
\usepackage{graphicx}
\usepackage{subcaption}
\usepackage{tikz}
\usetikzlibrary{bayesnet}
\usetikzlibrary{matrix}
\usepackage{nicematrix}
\usepackage{stackengine}
\usepackage{wrapfig}
\usepackage{algorithm}
\usepackage{centernot}
\usepackage{algpseudocode}

\newenvironment{talign*}
{\csname align*\endcsname}
{\endalign}
\newenvironment{talign}
{\align}
{\endalign}

\usetikzlibrary{arrows, fit, matrix, positioning, shapes, backgrounds}
\usepackage{cancel}
\newtheorem{prop}{Proposition}
\newtheorem{theorem}{Theorem}
\newtheorem{lemma}[theorem]{Lemma}
\theoremstyle{definition}

\newcommand{\iid}{\textit{i.i.d.\ }}

\title{Towards Healing  the  Blindness of  Score Matching}

%

\author{%
 Mingtian Zhang\\
 University College London\\
  \texttt{m.zhang@cs.ucl.ac.uk} \\
  \And 
  Oscar Key\\ 
   University College London\\
  \texttt{o.key@cs.ucl.ac.uk} \\
  \And 
    Peter Hayes\\ 
   University College London\\
  \texttt{p.hayes@cs.ucl.ac.uk} \\
  \AND 
    David Barber \\ 
   University College London\\
  \texttt{david.barber@ucl.ac.uk}\\
  \And 
    Brooks Paige\\ 
   University College London\\
  \texttt{b.paige@ucl.ac.uk} \\
  \And 
    Fran\c{c}ois-Xavier Briol\\ 
   University College London\\
  \texttt{f.briol@ucl.ac.uk} 
}
\begin{document}

\maketitle

\begin{abstract}
Score-based divergences  have been widely used in machine learning and statistics applications. Despite their empirical success, a blindness problem has been observed when using these for multi-modal distributions. In this work, we  discuss the blindness problem and propose a new family of divergences that can mitigate the blindness problem. We illustrate our proposed divergence in the context of density estimation and report improved performance compared to traditional approaches.
\end{abstract}

\section{Introduction}
Score-based divergences such as the Fisher Divergence (FD; also known as score-matching divergence)~\cite{hyvarinen2005estimation,hyvarinen2007some} and Kernel Stein Discrepancy (KSD)~\cite{liu2016kernelized,chwialkowski2016kernel} are widely used in machine learning and statistics~\cite{anastasiou2021stein,song2021train}. Their main advantage is that the score function, a derivative of a log-density, can be evaluated without knowledge of the normalization constant of the density and  can be applied to problems where other classical divergences (e.g. KL divergence) are intractable.
Unfortunately, this advantage can also be a curse in certain scenarios because the score function only provides local information about the slope of a density, but ignores more global information such as the importance of a point relative to another. This has led to a blindness problem in many applications of score-based methods where the densities are multi-modal, including in density estimation \cite{wenliang2019learning,song2019generative,jolicoeur2020adversarial}, MCMC convergence diagnosis~\cite{gorham2019measuring}, Bayesian inference~\cite{Matsubara2022,d2021annealed};
see \cite{wenliang2020blindness} for a detailed discussion.

To illustrate this problem, we recall the definition of FD and an example from \cite{wenliang2020blindness}. Given two distributions with differentiable densities $p$ and $q$ supported on a common domain $\X\subseteq \sR^d$, the FD is
\begin{talign}
  \FD(p||q) = \frac{1}{2}\int_{\X}p(x)||s_p(x)-s_q(x)||^2_2  dx,
\end{talign}
where we denote by  $s_p(x)\equiv\nabla_x\log p(x)$ and $s_q(x)\equiv \nabla_x\log q(x)$ the score functions of $p$ and $q$ respectively.
The classic sufficient conditions~\cite{hyvarinen2005estimation,barp2019minimum} for the FD to be a valid statistical divergence (i.e.
$\FD(p||q)= 0 \Leftrightarrow p=q$)  are: (i) $p$ and $q$ are differentiable with support $\X=\sR^d$ 
and
(ii) $s_p,s_q$ are square integrable, i.e. $s_p-s_q\in L^2(p)$, where we denote $f\in L^2(p)\equiv \int_\X ||f(x)||_2^2 p(x)dx<\infty$. The blindness problem of the FD can be illustrated through the following example due to \cite{wenliang2020blindness}. Let $p$ and $q$ be a mixtures with the same components but different mixing weights:
\begin{talign}
p(x)=\alpha_p g_1(x)+(1-\alpha_p)g_2(x), \quad q(x)=\alpha_q g_1(x)+(1-\alpha_q)g_2(x),\label{eq:toy:mixture}
\end{talign}
where $\alpha_p\neq \alpha_p$, and 
$g_1,g_2$ are Gaussian densities 
 with variance $\sigma^2$ and means $-\mu$ and $\mu$ respectively. Then $\mathrm{FD}(p||q)\rightarrow 0$ when $ \mu/\sigma^2\rightarrow\infty$ regardless of the mixture proportions $\alpha_p$ and $\alpha_q$. To build intuition, we let  $\mu=5$, $\sigma=1$, $\alpha_p=0.2, \alpha_q=0.8$  and plot the densities and score functions of $p,q$ in Figure~\ref{fig:desnities} and \ref{fig:scores}. We can find the two distributions are very different but their scores are only different around $x=0$, which has a negligible density value under  $p$. We then fix $\alpha_p=0.2$ and plot the $\FD(p||q)$ as a function of $\alpha$ in Figure \ref{fig:proportions}. Here we see the FD is $0$ constant function, which shows the FD is `blind' to the value of the mixture weight. See \cite{Matsubara2022} for a similar example for discrete $\X$.
\begin{figure}[t]
\centering
\begin{subfigure}[b]{0.32\textwidth}
     \centering
    \includegraphics[width=\textwidth]{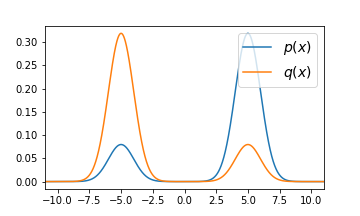}
    \caption{Densities of $p$ and $q$}
    \label{fig:desnities}
\end{subfigure}
  \begin{subfigure}[b]{0.32\textwidth}
     \centering
    \includegraphics[width=\textwidth]{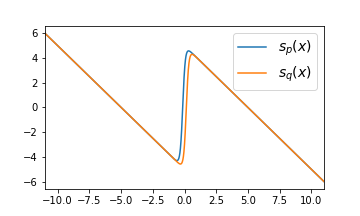}
    \caption{Score functions of $p$ and $q$}
    \label{fig:scores}
\end{subfigure}
  \begin{subfigure}[b]{0.32\textwidth}
     \centering
    \includegraphics[width=\textwidth]{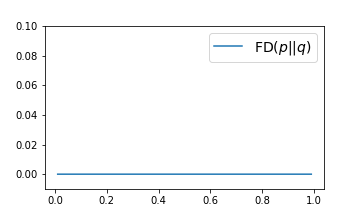}
    \caption{$\FD(p||q)$ with different $\alpha_q$}
    \label{fig:proportions}
\end{subfigure}
\caption{We plot the densities and score functions  of distributions $p$ and $q$ in Figure (a) and (b). Figure (c) shows $\FD(p||q)$ with $\alpha_p=0.2$ and $\alpha_q$ varies from $0.01$ to $0.09$ with a grid size $0.01$. \label{fig:blind}}
\vspace{-0.3cm}
\end{figure}

\section{Understanding the Blindness Problem}
In the example above, blindness is a numerical problem since the problem occurs despite the fact that the FD is a divergence in that case (i.e. $\FD(p||q)=0\Leftrightarrow p=q$ since (i) and (ii) are satisfied). When $\mu/\sigma^2\rightarrow\infty$, although the Gaussian distributions still have the same support, the regions that contain most of the mass of $g_1$ and $g_2$ tend to be disjoint, which creates numerical issues. 
However, the blindness problem is not simply a numerical problem, as illustrated in the following example.

Consider the case where $p$ and $q$ are mixtures whose identical components have disjoint supports.
For example, let $g_1$ and $g_2$ in Equation \ref{eq:toy:mixture} have disjoint support sets $\X_1,\X_2\subseteq \sR^d$ respectively with $\X_1\cap \X_2=\emptyset$. Then, $g_2(x')=\nabla_x g_2(x')=0$ for $x'\in \X_1$ and $g_1(x')=\nabla_x g_1(x')=0$ for $x'\in \X_2$.  In this case, the FD is independent of $\alpha_q$ (see Appendix \ref{app:derivation} for a derivation):
\begin{talign}
\resizebox{0.9\hsize}{!}{$%
    \mathrm{FD}(p||q)=\frac{\alpha_p}{2} \int_{\X_1} g_1(x)||s_{g_1}(x)-s_{g_1}(x)||_2^2 dx   +\frac{1-\alpha_p}{2} \int_{\X_2} g_2(x)||s_{g_2}(x)-s_{g_2}(x)||_2^2 dx=0.$}\label{eq:disjoint}
\end{talign}
Therefore, the FD is not a valid divergence here since $\mathrm{FD}(p||q)=0\centernot\Rightarrow p=q$. This example guides us to further study the topology properties of the distributions' support required by the FD. We first extend the Fisher divergence to distributions that have support on the connected space.

\begin{theorem}[FD on a connected set]
    Assume two  distributions (i) have  differentiable densities $p$ and $q$ with support on a common open connected set $\X\subseteq \sR^d$ and (ii) $s_p-s_q\in L^2(p)$. Then, the FD is a valid divergence i.e. $\FD(p||q)=0 \Leftrightarrow p=q$. \label{theorem:connected}
\end{theorem}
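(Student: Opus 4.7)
The direction $p = q \Rightarrow \FD(p||q) = 0$ is immediate from the definition, since then $s_p \equiv s_q$. For the converse, my plan is a three-step reduction: first upgrade $\FD(p||q) = 0$ to the pointwise identity $\nabla \log(p(x)/q(x)) = 0$ on $\X$; then use the connectedness of $\X$ to conclude $p/q$ is constant on $\X$; then use the normalization $\int_\X p = \int_\X q = 1$ to force that constant to equal $1$.

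For the first step, since $p$ has support $\X$ and is differentiable, $p > 0$ on the open set $\X$. The integrand in $\FD(p||q) = \frac{1}{2}\int_\X p(x) \|s_p(x) - s_q(x)\|^2 \, dx$ is non-negative, so vanishing of the integral forces $\|s_p(x) - s_q(x)\| = 0$ for Lebesgue-a.e.\ $x \in \X$. Using the (continuous) differentiability of $\log p$ and $\log q$ on $\X$, both $s_p$ and $s_q$ are continuous, so this a.e.\ equality promotes to $s_p(x) = s_q(x)$ for every $x \in \X$, i.e.\ $\nabla h(x) \equiv 0$ on $\X$, where $h := \log p - \log q$. For the second step, I would invoke the standard fact that a differentiable function whose gradient vanishes identically on a connected open subset of $\sR^d$ must be constant: apply the mean value theorem along line segments inside $\X$ and chain them together using path-connectedness of the open connected set. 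Thus $h \equiv c$ for some $c \in \R$, giving $p(x) = e^c q(x)$ on $\X$. Integrating both sides over $\X$ and using that $p$ and $q$ are both normalized on $\X$ yields $e^c = 1$, hence $p = q$ on $\X$. Since both vanish off $\X$, the equality extends to all of $\sR^d$. The footnote's relaxation to $\X_p \subseteq \X_q$ is handled by running the same argument on $\X_p$ (where $p > 0$ and $q > 0$).

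The main obstacle is the upgrade from a.e.\ equality of the scores to genuine pointwise equality on $\X$: $\FD(p||q) = 0$ only delivers an $L^2$-type vanishing, and without continuity of $s_p - s_q$ one cannot in principle rule out pathological (Cantor-staircase-like) deviations on the measure-zero exceptional set. Continuity of the scores, implicit in the usual score-matching regularity conventions, closes this gap cleanly; a more technical alternative would be to use that $s_p - s_q \in L^2(p)$ makes $h$ absolutely continuous along generic line segments in $\X$ and to conclude via an integral identity $h(y) - h(x) = \int_0^1 \nabla h(\gamma(t)) \cdot \dot\gamma(t) \, dt = 0$ along those paths.
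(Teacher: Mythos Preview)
Your proof is correct and follows essentially the same three-step structure as the paper's: from $\FD(p\|q)=0$ deduce $s_p=s_q$ on $\X$, use connectedness of $\X$ to conclude $\log p - \log q$ is constant, then invoke normalization to force the constant to be zero. The only minor differences are that the paper proves the ``zero gradient on a connected open set implies constant'' lemma via a topological open/closed decomposition (using local convexity) rather than your path-connectedness plus MVT-chaining argument, and that you are more explicit than the paper about the need for continuity of the scores to upgrade the a.e.\ equality $s_p=s_q$ to a pointwise one.
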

See Appendix \ref{app:proof:connected} for a proof. Theorem~\ref{theorem:connected} generalizes the classic FD that is defined on distributions with $\X=\sR^d$~\cite{hyvarinen2005estimation,barp2019minimum} ($\sR^d$ is a special case of the connected set). Secondly, Theorem~\ref{theo:ill:defined} shows that \emph{connectedness} of the support is a \emph{necessary} condition to define a valid FD.
\begin{theorem}[FD is ill-defined on disconnected sets\label{theo:ill:defined}]
    Assume two  distributions have common support $\X$ consisting of disjoint sets. Then, the FD is not a valid divergence i.e. $\FD(p||q)=0 \centernot\Rightarrow p=q$.
\end{theorem}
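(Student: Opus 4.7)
The plan is to exhibit an explicit counterexample generalizing equation \ref{eq:disjoint}. Suppose the common support decomposes as $\X = \X_1 \cup \X_2$ with $\X_1 \cap \X_2 = \emptyset$ and both components non-empty; without loss of generality we take each $\X_i$ to be open, which is the natural setting for differentiable densities. First, I would pick any two differentiable densities $g_1, g_2$ supported respectively on $\X_1$ and $\X_2$, and then, for any $\alpha_p, \alpha_q \in (0,1)$ with $\alpha_p \neq \alpha_q$, define the mixture densities
\begin{talign*}
p(x) = \alpha_p g_1(x) + (1-\alpha_p) g_2(x), \qquad q(x) = \alpha_q g_1(x) + (1-\alpha_q) g_2(x).
\end{talign*}
Both are differentiable densities with support exactly $\X$, and $p \neq q$ because $\alpha_p \neq \alpha_q$.

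Next, I would compute the score functions on each component of the support. On $\X_1$, both $g_2$ and $\nabla_x g_2$ vanish, so $p(x) = \alpha_p g_1(x)$ gives $s_p(x) = \nabla_x \log(\alpha_p g_1(x)) = \nabla_x \log g_1(x) = s_{g_1}(x)$, and identically $s_q(x) = s_{g_1}(x)$. The analogous identity holds on $\X_2$ with $g_2$. Hence $s_p - s_q \equiv 0$ everywhere on $\X$, which trivially implies $s_p - s_q \in L^2(p)$, and
\begin{talign*}
\FD(p||q) = \frac{1}{2} \int_{\X} p(x) ||s_p(x) - s_q(x)||_2^2 \, dx = 0,
\end{talign*}
even though $p \neq q$. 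This shows the FD fails to be a valid divergence whenever the support is disconnected.

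The main obstacle is essentially bookkeeping: interpreting the hypothesis ``common support consisting of disjoint sets'' precisely (as $\X_1, \X_2$ open, non-empty, and disjoint, so that smooth densities can be chosen on each piece without boundary issues), and noting that the vanishing of $g_2$ together with its derivative on $\X_1$ is what collapses the score-function difference to zero. Conceptually the construction is the exact mirror image of Theorem~\ref{theorem:connected}: connectedness is precisely what prevents one from independently re-weighting the mass on separate components while leaving every local score unchanged.
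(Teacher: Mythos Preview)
Your proposal is correct and follows essentially the same approach as the paper: construct mixtures $p=\alpha_p g_1+(1-\alpha_p)g_2$ and $q=\alpha_q g_1+(1-\alpha_q)g_2$ with identical components on the disjoint pieces but different weights, then observe that on each piece the other component and its gradient vanish so that $s_p=s_q$ pointwise. The paper packages this as a slightly more general proposition with $K$ components and an if-and-only-if characterization $\alpha_p^k=\alpha_q^k e^{c_k}$, but the mechanism and the counterexample are the same as yours.
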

See Appendix \ref{app:ill:defined} for a proof.  Intuitively, the score function only considers the local derivatives and contains no information of the global normalization constant. If the domain is disconnected, it cannot determine the mass allocation in different domains. This observation can also be extended to the KSD 
by viewing KSD as a kernelized FD~
\cite{liu2016kernelized,chwialkowski2016kernel}, see Appendix~\ref{app:ksd} for a detailed discussion.

\section{Healing the Blindness Problem with the Mixture Fisher Divergence}

In this section, we propose a new variant of the FD which is well-defined in the disconnected scenario. Consider a distribution with density $m$ with support $\X_m=\sR^d$ and define the mixtures
\begin{talign}
    \tilde{p}(x)=\beta p(x)+(1-\beta) m(x),\quad \tilde{q}(x)=\beta  q(x)+(1-\beta)  m(x),
\end{talign}
where $0<\beta<1$.
We then define the \emph{Mixture Fisher Divergence} (MFD)  as
\begin{talign}
    \MFD_{m,\beta} (p||q)\equiv \mathrm{FD}(\tilde{p}||\tilde{q}).
\end{talign}
Theorem~\ref{theo:mdf} shows the MFD is well-defined when $p$ and $q$ have support on a disconnected space.
\begin{theorem}[Validity of the MFD]\label{theo:mdf}
Consider two distributions with differentiable densities $p,q$  supported on  $\X_p,\X_q\subseteq\sR^d$ with $s_p,s_q\in L^2(p)$ and a differentiable density $m$ with support $\X_m=\sR^d, s_m\in L^2(p)$. Then MFD is a valid divergence, i.e. $\MFD(p||q)=0\Leftrightarrow \FD(\tilde{p}||\tilde{q}) \Leftrightarrow p=q$. 
\end{theorem}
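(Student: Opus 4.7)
The plan is to reduce the MFD claim to Theorem~\ref{theorem:connected} (validity of FD on connected support) by exploiting the fact that mixing in $m$ gives both $\tilde{p}$ and $\tilde{q}$ full support on $\R^d$. Concretely, I would proceed in three steps: (i) verify that $\tilde{p}, \tilde{q}$ satisfy the hypotheses of Theorem~\ref{theorem:connected}; (ii) invoke that theorem to conclude $\FD(\tilde{p}\|\tilde{q})=0 \Leftrightarrow \tilde{p}=\tilde{q}$; and (iii) show the equivalence $\tilde{p}=\tilde{q} \Leftrightarrow p=q$.

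For step (i), since $m$ has support $\X_m = \R^d$ and $0<\beta<1$, we get $\tilde{p}(x) \geq (1-\beta)m(x) > 0$ and similarly for $\tilde{q}$, so both have common open connected support $\R^d$. Differentiability of $\tilde{p},\tilde{q}$ is immediate from differentiability of $p,q,m$. The only nontrivial requirement is the square-integrability condition $s_{\tilde{p}} - s_{\tilde{q}} \in L^2(\tilde{p})$. The key identity here is that a score of a mixture is a convex combination of the component scores weighted by posterior responsibilities: writing $\nabla p = p\,s_p$ and $\nabla m = m\,s_m$ (extended by zero outside the respective supports), one has
\begin{talign*}
s_{\tilde{p}}(x) = \pi_p(x)\, s_p(x) + (1-\pi_p(x))\, s_m(x), \qquad \pi_p(x) \equiv \frac{\beta p(x)}{\tilde{p}(x)} \in [0,1],
\end{talign*}
and analogously for $s_{\tilde{q}}$ with weight $\pi_q$. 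Then $\|s_{\tilde{p}}-s_{\tilde{q}}\|_2 \leq \|s_p\|_2 + \|s_q\|_2 + 2\|s_m\|_2$ pointwise, so square-integrability under $\tilde{p}$ follows (using $\tilde{p} \leq \beta p + (1-\beta)m$ to split the integral, combined with the hypotheses that $s_p, s_q, s_m \in L^2(p)$ and the analogous $L^2(m)$ control implicit in $m$ being a density with differentiable score).

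For step (ii), I directly apply Theorem~\ref{theorem:connected} to the pair $(\tilde{p},\tilde{q})$ on the connected support $\R^d$. For step (iii), the equivalence $\tilde{p}=\tilde{q} \Leftrightarrow p=q$ is routine: $\tilde{p}(x)-\tilde{q}(x) = \beta(p(x)-q(x))$ for all $x\in\R^d$, and since $\beta>0$ the two differences vanish simultaneously (with the standard convention that $p$ and $q$ are extended by zero off their supports, so equality as functions on $\R^d$ yields $p=q$ as distributions and, in particular, $\X_p=\X_q$).

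The main obstacle I anticipate is bookkeeping around the integrability condition in step (i): one must be careful when $\X_p \neq \R^d$, because $s_p$ is only defined on $\X_p$ and ``pointwise'' formulas for $s_{\tilde{p}}$ need to be interpreted with $p\,s_p$ extended by zero on $\X_p^c$ (so that $s_{\tilde{p}}$ coincides with $s_m$ off $\X_p$). Everything else is essentially a direct application of the already established Theorem~\ref{theorem:connected}.
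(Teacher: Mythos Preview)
Your three-step plan---verify the hypotheses of Theorem~\ref{theorem:connected} for $(\tilde p,\tilde q)$ on $\sR^d$, apply that theorem, then cancel the common $(1-\beta)m$ term---is exactly the paper's approach. Your posterior-responsibility formula $s_{\tilde p}=\pi_p\, s_p+(1-\pi_p)\, s_m$ is a cosmetic rewriting of the paper's quotient bound $\bigl\|\beta\nabla p/\tilde p\bigr\|_2\le\|s_p\|_2$, and your identification of the $L^2$ bookkeeping (including the extension-by-zero convention for $p\,s_p$ off $\X_p$) as the only delicate point is precisely where the paper also concentrates its effort.
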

See Appendix \ref{app:theorem} for a proof. For MFD,  we no longer require that $p,q$ have \emph{common connected support}, since $\X_m=\sR^d$ results in  $\tilde{p},\tilde{q}$ having connected support $\sR^d$\footnote{A weaker condition of $m$ can be obtained by requiring the supports of $\tilde{p},\tilde{q}$, which we denote as $\X_{\tilde{p}}, \X_{\tilde{q}}$, to be connected and $\X_{\tilde{p}}\subseteq \X_{\tilde{q}}$.  We here only study the stronger condition that $m(x)$ has support $\sR^d$ for simplicity.}. 
The requirements of $m(x)$ are mild and hold for simple choices of distribution e.g. a Gaussian.
To avoid the numerical problem mentioned in Section 1,  $m(x)$ should be chosen to effectively connect the different component distributions. As an example, for the toy problem described in Figure 1 with components $\Normal(-5,1)$ and $\Normal(-5,1)$ we can choose $\beta=0.5$ 
 and $m(x)=\Normal(0,9)$ that covers both components. Figure \ref{fig:mixture} shows the densities and their score functions for $\tilde{p},\tilde{q}$. We see that the score functions are different on the high-density region of $\tilde{p}$. Figure \ref{fig:mixture:proportions} also shows the minimal value of the $\MFD(p||q)$ is attained when $\alpha_q=\alpha_p$, which indicates that the proposed MFD heals the blindness problem in this  example.


\begin{figure}[t]
\centering
\begin{subfigure}[b]{0.32\textwidth}
     \centering
    \includegraphics[width=\textwidth]{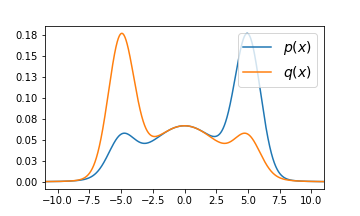}
    \caption{Densities of $\tilde{p}$ and $\tilde{q}$}
    \label{fig:mixture:desnities}
\end{subfigure}
  \begin{subfigure}[b]{0.32\textwidth}
     \centering
    \includegraphics[width=\textwidth]{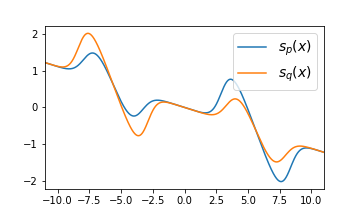}
    \caption{Score functions of $\tilde{p}$ and $\tilde{q}$}
    \label{fig:mixture:scores}
\end{subfigure}
  \begin{subfigure}[b]{0.32\textwidth}
     \centering
    \includegraphics[width=\textwidth]{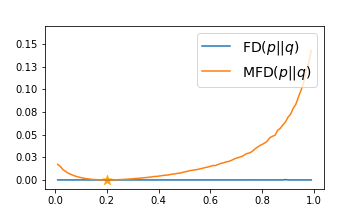}
    \caption{$\MFD(p||q)$ with different $\alpha_q$}
    \label{fig:mixture:proportions}
\end{subfigure}
\caption{We plot the  densities (a) and the score functions (b) of $\tilde{p}$ and $\tilde{q}$. Figure (c) shows $\MFD(p||q)$ with $\alpha_p=0.2$ and  $\alpha_q$ varies from $0.0$ to $1.0$ with a grid size $0.01$. The star mark shows the minima of the MFD is achieved when $\alpha_q=\alpha_p=0.2$, we also plot the original FD  for a comparison. \label{fig:mixture}}
\vspace{-0.3cm}
\end{figure}

\section{Density Estimation with Energy-based Models}
Given a dataset $\mathcal{X}_{\text{train}}=\{x_1,\cdots,x_N\}$ sampled \iid from a data distribution $p_d$  with support $\X_{p_d}\subseteq \sR^d$, we would like to learn a model $q_\theta$  to approximate $p_d$. We are interested in a family of models which can only be evaluated up to a normalization constant, e.g. an energy-based model $q_\theta(x)=e^{-f_\theta(x)}/Z(\theta)$, where $f_\theta$ is  a neural network and $Z(\theta)=\int e^{-f_\theta(x)}dx$. In this case, the standard Maximum Likelihood Estimation (MLE) is not applicable (since $Z(\theta)$ cannot be evaluated during training) and an
alternative form of the FD~\cite{hyvarinen2005estimation} can be applied (see Appendix~\ref{app:score:matching} for a derivation and additional assumptions)
\begin{talign}
     \mathrm{FD}(p_d||q_\theta)=\frac{1}{2}\int_{\X_{p_d}} p_d(x)\left( ||s_{q_\theta}(x)||_2^2+2 \Tr(\nabla _xs_{q_\theta}(x)) \right)dx +const., \label{eq:empirical:sm}
\end{talign}
where $\nabla_x s_{q_\theta}(x)=\nabla^2_x\log q_\theta(x)$ is the Hessian matrix and the constant represents the terms that are independent of $\theta$.
The integration over $p_d$ can be approximated by Monte-Carlo using $\mathcal{X}_{train}$. Because both $s_{q_\theta}$ and $\nabla_x s_{q_\theta}$ only depend on $f_\theta$, the normalizer $Z_q(\theta)$ is not required during training and we only need to estimate $Z_q(\theta^*)$ once after training.
Therefore, density estimation with FD in this setting contains two steps: (1) learn $\theta^*$ using Equation \ref{eq:empirical:sm}; (2) estimate $Z(\theta^*)$ to obtain the normalized density $q_\theta(x)$. This scheme can result in blindness in practice~\cite{wenliang2019learning}.

To heal the blindness, we can apply the proposed MFD. However, if we directly minimize MFD in step (1), the score $ s_{\tilde{q}_\theta}(x)=\nabla_x\log\left(\beta \exp(-f_\theta(x))/Z_q(\theta))+(1-\beta)m(x)\right)$
requires estimating $Z_q(\theta)$.
This negates the advantage of using score matching because now $Z_q(\theta)$ must be estimated for every gradient step during training (similar to MLE).
To avoid this, we propose to instead directly approximate $\tp_d$ with an energy-based model $\tq_\theta(x) \equiv e^{-f_\theta(x)}/Z_{\tilde{q}}(\theta)$ and
$\tq_\theta$ can then be trained using 
\begin{talign}
   \mathrm{FD}(\tilde{p}_d||\tilde{q}_\theta)=\frac{1}{2}\int_{\sR^d} \tilde{p}_d(x)\left( ||s_{\tilde{q}_\theta}(x)||_2^2+2 \Tr\left(\nabla _xs_{\tilde{q}_\theta}(x)\right) \right)dx +const.\label{eq:mixture:sm},
\end{talign}
where the integration over $\tilde{p}_d(x)$ can be approximated using the samples from the mixture $\tilde{p}_d(x)=\beta p_d(x) +(1-\beta)m(x)$.
 Therefore, the learning of $\theta$ is independent of $Z_{\tilde{q}}(\theta)$.
 Optimally we have $
    \tq_{\theta^*}(x) = \tp_d(x) = \beta p_d(x)+(1-\beta)m(x).$
To obtain a model of the underlying true density $q^* = p_d$, we need to remove the mixture component from $\tq_{\theta^*}$, which can be done through a `correction step':
\begin{talign}
q^{*}(x)= \frac{1}{\beta}\left(\tilde{q}_{\theta^*}(x)-(1-\beta)m(x)\right)=\frac{1}{\beta}\left(\frac{e^{-f_{\theta^*}(x)}}{Z_{\tilde{q}}(\theta^*)}-(1-\beta)m(x)\right).\label{eq:correction}
\end{talign}
This procedure for obtaining $q^*$ is equivalent to $q^*(x)=\arg\min_q \MFD(p_d(x)||q(x))$ and when $\MFD(p_d(x)||q(x))=0$, we have $q^*(x)=p_d(x)$. Therefore,
density estimation with MFD in this setting contains three steps: (1) learn $\theta^*$ by minimizing Equation \ref{eq:mixture:sm}; (2) 
estimate $Z_{\tq}(\theta^*)$; and (3)
apply the correction step (Equation \ref{eq:correction}) to obtain $q^*$. Compared to FD, the additional correction step has negligible computation cost. 

\textbf{Choice of m and $\beta$:} As we discussed in Section 3, a good $m$ should have support $\sR^d$ and be able to bridge disconnected component distributions. For a given set of data samples $\{x_1,\cdots,x_N\}\sim p_d$, we can simply choose $m(x) = \Normal(\bar\mu,\bar\Sigma)$, where $\bar\mu$ and $\bar\Sigma$ are the empirical mean and covariance of the available training data:
$\bar\mu=\frac{1}{N}\sum_{n=1}^N x_n,  \bar\Sigma=\frac{1}{N}\sum_{n=1}^N x_nx_n^T,$
which corresponds to an empirical moment matching approximation of  $p_d$ and can thus cover different components. The $\beta$ is treated as a hyper-parameter in our method. Intuitively, a large beta means that the proportion of data points from $p_d$ is small, and the model is learning $m$. On the other hand, a small value means we may still have the numerical version of the blindness issue. In this experiment, we use $\beta=0.8$ can find it can empirically heal blindness. We leave the theoretical study of choosing the $\beta$ into future work.

\textbf{Demonstration:}
We apply the proposed method to train a deep energy-based model and examine the performance against two target densities with multiple isolated components: 1) a weighted mixture of four Gaussians 
$p_d(x)= 0.1 g_1(x)+ 0.2 g_2(x)+ 0.3 g_3(x)+ 0.4 g_4(x)$, where $g_1, g_2,g_3,g_4$ are 2D Gaussians with identity covariance matrix and mean $[-5,-5],[-5,5],[5,5],[5,-5]$ respectively; and 2) a mixture of 3 concentric circles as proposed in \cite{wenliang2019learning}. We use Simpson's rule for the 2D numerical integration to estimate the normalization constant for both methods. The model specifications and training details can be found in Appendix~\ref{app:network}. In Figure 3 we plot the ground truth and the estimated density with classic FD and the proposed MFD methods. We also provide the corresponding $\KL$ evaluation (see Appendix~\ref{app:network}) between the ground truth density $p_d$ and the estimated model $p_\theta$. We find the proposed MFD method can significantly improve performance and heal the blindness problem. 

\begin{figure}[t]
\centering
\begin{subfigure}[b]{0.161\textwidth}
     \centering
\includegraphics[scale=0.3]{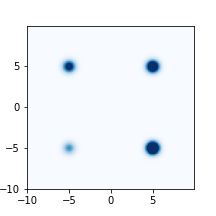}
    \caption{True}
\end{subfigure}
  \begin{subfigure}[b]{0.161\textwidth}
     \centering
\includegraphics[scale=0.3]{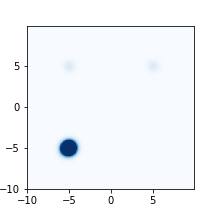}
    \caption{FD }
\end{subfigure}
  \begin{subfigure}[b]{0.161\textwidth}
     \centering
\includegraphics[scale=0.3]{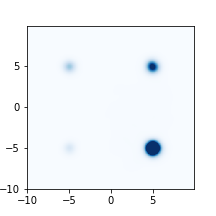}
    \caption{MFD}
\end{subfigure}
\begin{subfigure}[b]{0.161\textwidth}
     \centering
\includegraphics[scale=0.3]{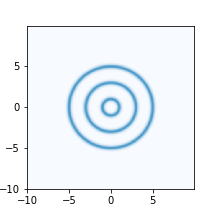}
    \caption{True}
\end{subfigure}
  \begin{subfigure}[b]{0.161\textwidth}
     \centering
\includegraphics[scale=0.3]{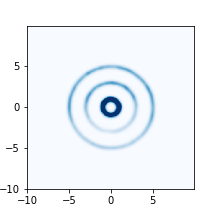}
    \caption{FD}
\end{subfigure}
  \begin{subfigure}[b]{0.161\textwidth}
     \centering
\includegraphics[scale=0.3]{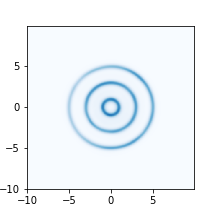}
    \caption{MFD}
\end{subfigure}
\caption{Density estimation comparisons with FD and MFD for the energy-based model. The $\mathrm{KL}(p_d||p_\theta)$ evaluations are 3.52/0.22 (b/e) for FD  and 0.17/0.01 (c/f) for MFD, lower is better.} 
\vspace{-0.2cm}
\end{figure}

\section{Related Work}


In addition to the mixture construction, conducting a Gaussian convolution on both $p_d$ and $q_\theta$ can also bridge the disjoint components and defines a valid divergence~\cite{zhang2020spread}. However, the score function is generally intractable for a deep energy-based model $q$, see Appendix \ref{app:spread:FD} for a detailed discussion.


Paper \cite{song2019generative} proposes to add  Gaussian noise with variance $\sigma^2$ \emph{only} to $p_d$ and anneal $\sigma^2\rightarrow 0$ during training. This helps alleviate the blindness problem in the early stage of training but when $\sigma^2\approx 0$, the blindness phenomenon will be observed again, see Appendix~\ref{app:data:noise} for an example. 

Paper~\cite{gong2021interpreting}  proposes to transform $p$ and $q$ with a common differentiable invertible function before defining the FD, which is also shown to be equivalent to \cite{barp2019minimum}. However, since the invertible transformation is a homeomorphism and will not change 
the topology of its domain \cite{cornish2020relaxing,zhang2021flow}, the invertible transformation will not fix the blindness caused by the disconnected support sets in principle. 



The blindness problem also exists in other score-based applications.
As discussed in Section 3, directly applying the MFD requires knowing the normalizer, which potentially sheds light on choosing score-based methods. We  leave the case-by-case study of how to heal the blindness to future work.

\clearpage
\newpage
\bibliographystyle{abbrvnat}
\bibliography{main.bib}

\clearpage
\newpage
\appendix

\section{Derivations and Proofs}
\subsection{Derivation of Equation \ref{eq:disjoint} \label{app:derivation}}
Let two differentiable densities $g_1$ and $g_2$ have disjoint supports  $\X_1\cap \X_2=\emptyset$ and
\begin{talign}
    p(x)=\alpha_p g_1(x)+(1-{\alpha_p})g_2(x), \quad q(x)={\alpha_q} g_1(x)+(1-{\alpha_q})g_2(x).
\end{talign}
The FD between $p$ and $q$ can be written as
\begin{talign}
  \mathrm{FD}(p||q)=\frac{\alpha_p}{2} \int_{\X_1} g_1(x)||s_ p(x)-s_q(x)||_2^2 dx  +\frac{1-\alpha_p}{2} \int_{\X_2} g_2(x)||s_p(x)-s_q(x)||_2^2 dx.
\end{talign}
Since $g_1$ and $g_2$ has disjoint support, so  $g_2$ will be a zero function on the support of $g_1$, so  $g_2(x')=\nabla_x g_2(x')=0$ for $x'\in \X_1$. We then have
\begin{talign}
   s_p(x')=\frac{\alpha_p\nabla g_1(x')+\cancel{(1-\alpha_p)\nabla g_2(x')}}{\alpha_p g_1(x')+\cancel{(1-\alpha)g_2(x')}}= \frac{\alpha_p \nabla_x g_1(x')}{\alpha_p g_1(x')}= s_{g_1}(x'),
\end{talign}
and
\begin{talign}
   s_q(x')=\frac{\alpha_q\nabla g_1(x')+\cancel{(1-\alpha_q)\nabla g_2(x')}}{\alpha_q g_1(x')+\cancel{(1-\alpha)g_2(x')}}= \frac{\alpha_q \nabla_x g_1(x')}{\alpha_q g_1(x')}=s_{g_1}(x'),
\end{talign}
Similarly, for $x'\in \X_2$ we have $s_p(x')=s_q(x')=s_{g_2}(x')$.
Therefore, the FD is equivalent to 
\begin{talign}
\resizebox{0.9\hsize}{!}{$%
  \mathrm{FD}(p||q)=\frac{\alpha_p}{2} \int_{\X_1} g_1(x)||s_{g_1}(x)-s_{g_1}(x)||_2^2 dx   +\frac{1-\alpha_p}{2} \int_{\X_2} g_2(x)||s_{g_2}(x)-s_{g_2}(x)||_2^2 dx=0,$}
\end{talign}
which is independent of $\alpha_q$.
\subsection{Proof of Theorem~\ref{theorem:connected}\label{app:proof:connected}}
The following two lemmas can be found in \citet[Corollary 2.41 and Theorem 2.42]{folland2003advanced}. For completeness, we also provide simplified proofs. 

\begin{lemma}
    Suppose $f: \X\rightarrow \sR$ is differentiable on an open convex set $\X\subseteq \sR^d$ and $\nabla_x f(x)=0$ for all $x\in \X$, then $f$ is a constant on $\X$.\label{app:lemma:convex}
\end{lemma}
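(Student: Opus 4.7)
The plan is to exploit convexity of $\X$ to reduce the multivariate hypothesis to a one-dimensional statement along line segments. I will fix an arbitrary reference point $x_0 \in \X$ and show $f(x) = f(x_0)$ for every $x \in \X$, from which constancy follows immediately.

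The main step is to introduce, for each $x \in \X$, the straight-line path $\gamma(t) = x_0 + t(x - x_0)$ and the scalar function $h(t) = f(\gamma(t))$ on $[0,1]$. Convexity of $\X$ guarantees that $\gamma([0,1]) \subseteq \X$, so $h$ is well defined; openness of $\X$ ensures that the chain rule applies at each $t \in [0,1]$ (using one-sided derivatives at the endpoints) and yields $h'(t) = \nabla f(\gamma(t))^\top (x - x_0)$. The vanishing hypothesis on $\nabla f$ then forces $h' \equiv 0$ on $[0,1]$, and the one-dimensional mean value theorem gives $h(0) = h(1)$, i.e.\ $f(x_0) = f(x)$.

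There is no substantive obstacle here; this is the standard calculus fact, and the only delicate point is making sure the chain rule is legitimate on the closed interval $[0,1]$, which is handled by openness of $\X$ combined with one-sided derivatives at the endpoints. Convexity is used in an essential way: for a general open connected set one would lose the ability to interpolate by a single straight line. That is precisely why the companion situation relevant to Theorem~\ref{theorem:connected} requires an additional step chaining together overlapping convex balls along a polygonal path, reducing the connected case back to finitely many applications of the present lemma.
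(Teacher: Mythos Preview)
Your argument is correct and essentially the same as the paper's: both use convexity to stay inside $\X$ along the segment between two points and then invoke the mean value theorem to conclude equality of function values. The only cosmetic difference is that the paper applies the multivariate mean value theorem directly to obtain a point $x_3$ on the segment with $f(x_2)-f(x_1)=\nabla f(x_3)^\top(x_2-x_1)$, whereas you first parametrize the segment and apply the one-dimensional version to $h(t)=f(\gamma(t))$; since the multivariate MVT is proved precisely by that parametrization, the two are the same proof.
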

\begin{proof}
    For any two points $x_1,x_2\in \X$, we denote the the line segment that connects $a,b$ as $L_{x_1,x_2}$. Since $\X$ is a convex set, then $L_{x_1,x_2}\subseteq \X$. By the Mean Value Theorem (see \citet[Theorem 2.39]{folland2003advanced}), there exists a point $x_3\in L_{x_1,x_2}$ such that 
    $f(x_2)-f(x_1)=\nabla_x f(x_3) (x_2-x_1).$
    Since $x_3\in S$, so $\nabla_x f(x_3)=0$ thus $f(x_2)=f(x_1)$. Therefore, $f$ has to be a constant function.
\end{proof}

\begin{lemma}
    Suppose $f:\X\rightarrow \sR$ is differentiable on a connected open set $\X\subseteq\sR^d$ and $\nabla_x f(x)=0$ for all $x\in \X$, then $f$ is a constant on $\X$.\label{app:lemma:connected}
\end{lemma}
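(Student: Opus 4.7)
The plan is to deduce the connected-set statement from Lemma~\ref{app:lemma:convex} (the convex case just established) by a standard clopen-subset argument exploiting connectedness of $\X$.

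First I would fix an arbitrary basepoint $x_0 \in \X$ and define the level set
\begin{talign*}
A = \{x \in \X : f(x) = f(x_0)\}.
\end{talign*}
The goal is to show $A = \X$. Since $x_0 \in A$, the set is nonempty, so by connectedness of $\X$ it suffices to prove that $A$ is both open and closed in $\X$.

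Next I would establish openness of $A$. For any $x \in A$, openness of $\X$ gives an open ball $B(x,r) \subseteq \X$ on which $\nabla_y f(y) = 0$. Balls are convex, so Lemma~\ref{app:lemma:convex} applies and yields that $f$ is constant on $B(x,r)$; in particular $f \equiv f(x) = f(x_0)$ throughout $B(x,r)$, giving $B(x,r) \subseteq A$. The same reasoning applied to the complement $\X \setminus A = \{x \in \X : f(x) \neq f(x_0)\}$ shows it is open too: around any $x$ in the complement we find a ball on which $f$ is constantly equal to $f(x) \neq f(x_0)$. Hence $A$ is closed in $\X$.

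Finally I would invoke connectedness: the only nonempty clopen subset of the connected space $\X$ is $\X$ itself, so $A = \X$, which is exactly the statement that $f$ is constant on $\X$. I do not expect any serious obstacle here — the only subtlety is recognising that openness of $\X$ is needed to fit a convex ball around each point so that Lemma~\ref{app:lemma:convex} can be invoked locally; everything else is the textbook clopen argument for connected spaces.
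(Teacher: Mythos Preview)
Your proposal is correct and follows essentially the same clopen argument as the paper: fix a basepoint, show the level set and its complement are both open, and invoke connectedness. The only cosmetic difference is that the paper proves openness of the complement via continuity of $f$ (as the preimage of the open set $\sR\setminus\{f(x_0)\}$), whereas you reuse the local-constancy-on-balls argument from Lemma~\ref{app:lemma:convex}; both work equally well.
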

\begin{proof}
For any point $a\in \X$, we define $\X_1=\{x\in \X: f(x)=f(a)\}$ and $\X_2=\{x\in \X: f(x)\neq f(a)\}$, so $\X=\X_1\cup \X_2$ by construction. 
For every $x\in \X_1$, there is a ball $B\in S$ centred at $x$. Since $B$ is convex, we have $B\in \X_1$ by Lemma \ref{app:lemma:convex}. Therefore, every point $x\in \X_1$ is an interior point of $\X_1$, so $\X_1$ is an open set. The image of $\X_2$ under $f$: $\sR\setminus\{f(a)\}$ is an open set, so $\X_2$ is a open set since $f$ is a continuous function (see \citet[Theorem 1.33]{folland2003advanced}). We thus have both $\X_1$ and $\X_2$ are open sets and $\X_1$ is non-empty (it contains $a$). Since any connected space cannot be written as an union of two disjoint non-empty sets (see \citet[Definition 2.4.1]{tao2015analysis}), so $\X=\X_1\cup \X_2$  indicates $\X_2=\emptyset$. Therefore, $f$ is a constant function.
\end{proof}
We can then prove the Theorem \ref{theorem:connected}. For two a.c. distributions that are supported on a connected space $\X\subseteq\sR^d$ with differentiable density $p$ and $q$. Then $\FD(p||q)=0\Leftrightarrow \nabla_x\log p(x)=\nabla_x\log q(x)$ for $x\in S$. We define function $f(x)=\log p(x)-\log q(x)$, so $f(x)$ differentiable on $\X$ and $\nabla_x f(x)=0$. By Lemma \ref{app:lemma:connected}, we have $f$ as a constant function (we denote as $c$) so we have $p=q\exp(c)$. Since $p$ and $q$ are densities, we have $\int q(x)\exp (c)dx=1\Leftrightarrow c=0$. Therefore,
$\FD(p||q)=0\Leftrightarrow p=q$.

\subsection{Proof of Theorem \ref{theo:ill:defined} \label{app:ill:defined}}
Since we can always represents a distribution with disjoint support set as a mixture distribution with components supported on several connected subsets, we can then prove the theorem  by Proposition~\ref{prop:1}.
\begin{prop}[FD is ill-defined on disconnected sets]
Let a set of a.c. distributions have differentiable densities $\{g_1,\cdots,g_K\}$ with mutual disjoint (disconnected) support sets $\{\X_1,\cdots,\X_K\}$: $\X_i\bigcap\X_j=\emptyset$ for any $i\neq j$ and each support $\X_i$ is connected. Let two densities $p=\sum_{k} \alpha^k_p g_k$ and $q=\sum_{k} \alpha^k_q g_k$ with positive coefficients $\sum_{k} \alpha^k_p=1$ and $\sum_{k=1} \alpha^k_q=1$. Then $\FD(p||q)=0\Leftrightarrow \alpha_p^k= \alpha_q^ke^{c_k}$, where $\{c_1,\cdots,c_K\}$ is a set of constants with constraints $\sum_{k}e^{c_k}=1$.\label{prop:1}
\end{prop}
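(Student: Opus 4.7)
The overall plan is to exploit the disjointness of the supports to break the FD into a sum of per-component integrals, then use Lemma~\ref{app:lemma:connected} (constancy on connected open sets) to pass from the score-equality condition to the weight identity claimed by the proposition. The key observation that makes everything work is that the score function of a mixture, evaluated inside the support of a single component, depends only on that component, not on the mixing weights.

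First, I would fix an arbitrary $k$ and a point $x\in\X_k$. Because the $\X_j$ are mutually disjoint and each $g_j$ (together with $\nabla g_j$) vanishes off its support, the mixture collapses on $\X_k$ to $p(x)=\alpha_p^k g_k(x)$ and $q(x)=\alpha_q^k g_k(x)$, with $\nabla p(x)=\alpha_p^k\nabla g_k(x)$ and likewise for $q$. Hence $s_p(x)=s_{g_k}(x)=s_q(x)$ on $\X_k$, so the integrand of $\FD(p\|q)$ vanishes identically on $\bigcup_k \X_k$, and
\begin{talign*}
\FD(p\|q)=\sum_{k=1}^{K}\tfrac{\alpha_p^k}{2}\int_{\X_k} g_k(x)\,\|s_p(x)-s_q(x)\|_2^2\,dx = 0.
\end{talign*}
This already shows FD is identically zero in this setting (matching the spirit of Theorem~\ref{theo:ill:defined}), so both directions of the ``$\Leftrightarrow$'' must be established through the stated constant-per-component reformulation rather than by distinguishing FD values.

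Next, I would characterize the zero-set in the form given by the proposition. The condition $\FD(p\|q)=0$ is equivalent to $\nabla_x\log p(x)=\nabla_x\log q(x)$ almost everywhere on the support, i.e.\ $\nabla_x\bigl(\log p-\log q\bigr)=0$ on each $\X_k$. Since each $\X_k$ is open and connected and $\log p-\log q$ is differentiable there, Lemma~\ref{app:lemma:connected} gives a constant $c_k$ with $\log p(x)-\log q(x)=c_k$ on $\X_k$. Substituting $p(x)=\alpha_p^k g_k(x)$ and $q(x)=\alpha_q^k g_k(x)$, the $g_k$ factor cancels and yields $\log(\alpha_p^k/\alpha_q^k)=c_k$, i.e.\ $\alpha_p^k=\alpha_q^k e^{c_k}$. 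Conversely, if $\alpha_p^k=\alpha_q^k e^{c_k}$ for every $k$, the same cancellation shows $s_p=s_q$ on each $\X_k$, hence $\FD(p\|q)=0$. The normalization constraint on the $c_k$'s falls out of $1=\sum_k \alpha_p^k=\sum_k \alpha_q^k e^{c_k}$ together with $\sum_k\alpha_q^k=1$.

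I do not anticipate a serious technical obstacle: the decomposition argument is routine once disjointness is used to zero out cross-component contributions, and the constancy-on-connected-open-sets step is already packaged as Lemma~\ref{app:lemma:connected}. The only place requiring a bit of care is justifying that the score identities and pointwise simplifications hold everywhere on $\X_k$ (not merely almost everywhere), which follows from the assumption that each $g_k$ is differentiable and the $\X_k$ are disjoint, so no boundary effects contaminate the interior of any component's support.
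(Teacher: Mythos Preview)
Your proposal is correct and follows essentially the same route as the paper: decompose the FD over the disjoint supports, observe that on each $\X_k$ the mixture reduces to a single component so the scores agree, and then invoke Lemma~\ref{app:lemma:connected} on each connected piece to extract the per-component constants $c_k$ and the weight relation $\alpha_p^k=\alpha_q^k e^{c_k}$. You are slightly more thorough than the paper in that you explicitly note $\FD(p\|q)$ is identically zero here and you spell out the converse direction, but the core argument is the same.
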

We can decompose 
$\FD(p||q)=\frac{1}{2}\sum_{k=1}^K \alpha_p^k \int_{\X_k} g_k(x)||s_p(x)-s_q(x)||_2^2dx$. Since $\alpha^k_p$ and $g_k$ are positive,   $\FD(p||q)=0\Rightarrow \int_{\X_k} g_k(x)||s_p(x)-s_q(x)||_2^2dx=0$ for any $k$, so $\nabla_x\log p(x)=\nabla_x\log q(x)$ for $x\in \bigcup_{k=1}^K {\X_k}$. Since ${\X_k}$ is connected, by Lemma \ref{app:lemma:connected}, we have for $x\in {\X_k}$, $\log p(x)-\log q(x)=c_k\Leftrightarrow p(x)=q(x)e^{c_k}\Leftrightarrow  \alpha_p^k g_k(x)=\alpha_q^k g_k(x) e^{c_k}\Leftrightarrow  \alpha_p^k= \alpha_q^ke^{c_k}$, where $\{c_1,\cdots,c_K\}$ is a set of constants. Since $\sum_k\alpha_p^k=\sum_k\alpha_q^k e^{c_k}=1$ and $\sum_{k}\alpha_q^k=1$, we then have the constrain
$\sum_{k} e^{c_k}=1$.

\subsection{Derivation of Score Matching\label{app:score:matching}} 
Let $p_d$ and $q_\theta$ are differentiable densities with a common support $\X\subseteq \sR^d$ and  assume $q_\theta$ is twice differentiable, we can  rewrite the FD as~\cite{hyvarinen2005estimation}
\begin{talign}
    \mathrm{FD}(p_d(x)||q_\theta(x))&=\frac{1}{2}\int_{\X} p_d(x)||s_{p_d}(x)-s_{q_\theta (x)}||^2_2dx\\
    &=\frac{1}{2}\int_{\X} p_d(x)\left( s^2_{p_d}(x)+s^2_{q_\theta}(x)-2 s_{p_d}(x)s_{q_\theta}(x) \right) dx\\
    &=\frac{1}{2}\int_{\X} p_d(x)\left( s^2_{q_\theta}(x)-2 s_{p_d}(x)s_{q_\theta}(x) \right)dx +const.,
\end{talign}
where the constant terms are independent of the model parameters $\theta$. Using the log-trick, we have 
\begin{talign}
    \int_{\X}  p_d(x)s_{p_d}(x)s_{p_\theta}(x)dx&= \int_{\X} \nabla_x p_d(x)s_{q_\theta}(x)dx.
\end{talign}
For simplicity, we assume $\X=\sR$ and $p_d(x)s_{p_\theta}(x)$ vanishes at $-\infty$ and $\infty$, using integration by parts, we have
\begin{talign}
    \int_{\X} \nabla_x p_d(x)s_{q_\theta}(x)dx&=\underbrace{p_d(x)s_{q_\theta}(x) \Big|^{+\infty}_{-\infty}}_{=0}-\int_{\X} p_d(x)\nabla_xs_{q_\theta}(x)dx.
\end{talign}
In general, this holds for $\X=\sR^d$ and $\lim_{||x||\rightarrow\infty} p_d(x)s_{q_\theta}(x)=0$ or $\X\subseteq\sR^d$ is a compact subset of $\sR^d$ and $f(x)p(x)=0$ for $x\in\partial\X$ where $\partial\X$ 
is the piecewise smooth boundary of $\X$ (by the divergence theorem~\cite[Theorem 5.34]{folland2003advanced}), also see~\cite{liu2016kernelized} for a similar discussion. Therefore, we have
\begin{talign}
     \mathrm{FD}(p_d(x)||q_\theta(x))&=\frac{1}{2}\int_{\X} p_d(x)\left( s^2_{q_\theta}(x)+2 \nabla _xs_{q_\theta}(x) \right)dx..
\end{talign}

\subsection{Kernelized Stein Discrepancy Extensions\label{app:ksd}}
For two a.c. distributions $p$ and $q$, the Kernelized Stein Discrepancy~\cite{liu2016kernelized,chwialkowski2016kernel} can be defined as (see \cite[Definition 3.2]{liu2016kernelized})
\begin{talign}
    \mathrm{KSD}(p||q)=\mathbb{E}_{x,x'\sim p} \left [(s_p(x)-s_q(x))k(x,x')(s_p(x')-s_q(x'))\right ],
\end{talign}
where $k$ is an integrally strictly positive kernel (see \cite[Definition 3.1]{liu2016kernelized}) and $x,x'$ are \iid samples from $p(x)$. The $\mathrm{KSD}(p||q)=0$ if and only if $s_p=s_q$ (see~\cite{liu2016kernelized,chwialkowski2016kernel}). Therefore, when $p$ and $q$ are supported on a connected open set, by Lemma \ref{app:lemma:connected}, we have $\mathrm{KSD}(p||q)=0\Leftrightarrow s_p=s_q\Leftrightarrow p=q$. When $p$ and $q$ are supported on a disconnected space, we have $\mathrm{KSD}(p||q)=0 \centernot\Rightarrow p=q$. This is because the KSD can be upper bounded by a (positively) scaled FD \cite[Theorem 5.1]{liu2016kernelized}:
\begin{talign}
    |\mathrm{KSD}(p||q)|\leq \sqrt{\mathbb{E}_{x,x'\sim p}[k(x,x')^2]}\times \FD(p||q),
\end{talign}
we then have $\FD(p||q)=0 \Rightarrow \mathrm{KSD}(p||q)=0$. When  $p$ and $q$ are supported on a disconnected space, we have $\mathrm{FD}(p||q)=0 \centernot\Rightarrow p=q$ (Theorem~\ref{theo:ill:defined}), so $\mathrm{KSD}(p||q)=0 \centernot\Rightarrow p=q$.

\subsection{Proof of Theorem \ref{theo:mdf} \label{app:theorem}}
Since the support of $m$ as $\X_m=\sR^d$ then $\tilde{p}$ and $\tilde{q}$ have the same support $\X=\sR^d$. For the score functions, we also have
\begin{talign}
    &\int_{\X}|| s_{\tilde{p}}(x)||_2^2\tilde{p}(x)dx=\int_{\X} \norm{\nabla_x\log (\beta p(x)+(1-\beta)m(x))}_2^2 \tilde{p}(x)dx\\
    =& \int_{\X} \norm{\frac{\beta\nabla_x p(x)+(1-\beta)\nabla_x m(x)}{\beta p(x)+(1-\beta)m(x)}}_2^2 \tilde{p}(x)dx \\
    \leq& \int_{\X} \norm{\frac{\beta \nabla_x p(x)}{\beta p(x)+(1-\beta)m(x)}}_2^2\tilde{p}(x)dx+\int_{\X} \norm{\frac{(1-\beta) \nabla_x m(x)}{\beta p(x)+(1-\beta)m(x)}}_2^2\tilde{p}(x)dx \\
    \leq &\int_{\X} ||s_p||_2^2\tilde{p}(x)dx+\int_{\X} ||s_m||_2^2\tilde{p}(x)dx \leq \int_{\X} ||s_p||_2^2p(x)dx+\int_{\X} ||s_m||_2^2p(x)dx<\infty,
\end{talign}
so $s_{\tilde{p}}\in L^2(\tilde{p})$ and similarly  $s_{\tilde{q}}\in L^2(\tilde{p})$. Therefore, the FD between $\tilde{p}$ and $\tilde{q}$ is a valid divergence i.e. $\FD(\tilde{p}||\tilde{q})=0\Leftrightarrow \tilde{p}=\tilde{q} \Leftrightarrow \beta  p(x)+(1-\beta) m(x)=\beta  q(x)+(1-\beta) m(x)
  \Leftrightarrow  p(x)=q(x)$, thus $\MFD(p||q)=0\Leftrightarrow \FD(\tilde{p}||\tilde{q})=0\Leftrightarrow p=q$.

\section{Experiment Details\label{app:network}}
For both experiments, we sample 100k data from $p_d$ as our training datasets. The energy network $f_\theta(x)$ is a 3-layer feedforward network with 200 hidden units and swish activation functions~\cite{ramachandran2017searching}. We train the model for 30k iterations with the Adam optimizer~\cite{kingma2014adam} and batch-size 300. 
For the numerical integration we use Simpson’s rule provided in the package~\cite{2020SciPy-NMeth}. We use a Monte-Carlo approximation to estimate the KL divergence evaluations $\widehat{\mathrm{KL}}(p_d(x)||q_\theta(x))= \frac{1}{K}\sum_{k=1}^K \log p_d(x_k)-\log p_\theta(x_k)$, where we use $K=10000$.

\section{Data Noise Annealing Doesn't Help\label{app:data:noise}}
In this section, we empirically show that only adding noise to the data and annealing the noise to $0$ during training won't fix the blindness problem in practice.
We use a deep energy-based model with a 3-layer feedforward neural network with 30 hidden units and tanh activation function to learn the toy mixture of two Gaussian distributions described in Section 1. We train the model with Adam optimizer with a learning rate $3e^{-4}$ for 10k iterations and batch size 300. We add convolutional Gaussian noise to the data samples with a standard deviation of 3.0 and anneal to 0 by multiplying by 0.9999 at each iteration. The noise at the end of training has a standard deviation less than 0.001. In Figure \ref{figure:toy:fd:1d:anneal} we plot the learned density during training. We find that when the noise is big the model can identify the correct mixture co-efficient, but when the noise is close to 0, the model fails to capture the correct mixing proportions. We also plot the density estimation results with vanilla FD and the proposed MFD in Figure \ref{figure:toy:fd:1d} and \ref{figure:toy:mfd:1d} and we find that the density estimation with MFD achieves the best performance.

\begin{figure}[H]
    \centering
\includegraphics[width=0.9\textwidth]{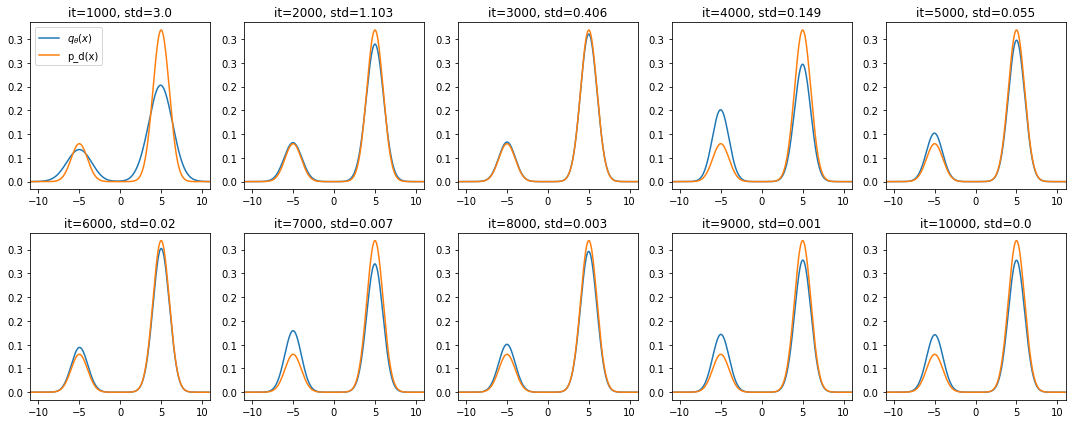}
    \caption{FD with training data noise annealing\label{figure:toy:fd:1d:anneal}}
\end{figure}
\begin{figure}[t]
\centering
\begin{subfigure}[b]{0.4\textwidth}
     \centering
\includegraphics[width=0.8\textwidth]{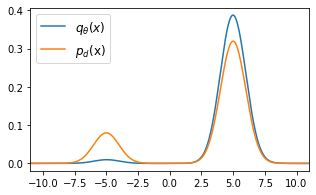}
    \caption{FD\label{figure:toy:fd:1d}}
\end{subfigure}
  \begin{subfigure}[b]{0.4\textwidth}
     \centering
\includegraphics[width=0.8\textwidth]{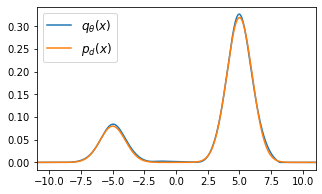}
    \caption{MFD\label{figure:toy:mfd:1d}}
\end{subfigure}

\caption{Density Estimation with FD and MFD.}

\end{figure}

\section{Spread Fisher Divergence\label{app:spread:FD}}
For two distributions with densities $p_d(x)$ and $q_\theta(x)$ with supports $\X_{p_d},\X_{q_\theta}\subseteq\sR^d$, we can choose $k(\tilde{x}|x)=\mathrm{N}(x,\sigma^2)$ and let 
\begin{talign}
\tilde{p}_d(\tilde{x})=\int_{\X_{p_d}} k(\tilde{x}|x)p_d(x)dx\quad \tilde{q}_\theta(\tilde{x})=\int_{\X_{q_\theta}} k(\tilde{x}|x)q_\theta(x)dx
\end{talign}
We follow the spread $f$-divergence~\cite{zhang2020spread} and define the \emph{Spread Fisher Divergence} ($\widetilde{\mathrm{FD}}$) as
\begin{talign}
    \widetilde{\mathrm{FD}}_k(p_d||q_\theta)\equiv\mathrm{FD}(\tilde{p}_d||\tilde{q}_\theta),
\end{talign}
The convolution transform makes $\tilde{p}_d$ and $\tilde{q}_\theta$ have support $\
X_{\tilde{p}_d}=X_{\tilde{q}_\theta}=\sR^d$ (which is a connected space) and  $\widetilde{\mathrm{FD}}_k(p_d||q_\theta)\equiv\mathrm{FD}(\tilde{p}_d||\tilde{q}_\theta)$ is a valid discrepancy, i.e. $\widetilde{\mathrm{FD}}_k(p_d||q_\theta)=0\Leftrightarrow \tilde{p}_d=\tilde{q}_\theta\Leftrightarrow p_d=q_\theta$.  The spread Fisher divergence is also well-defined for the singular distributions (distributions that are not a.c. w.r.t Lebesgue measure), see~\cite{zhang2020spread} for a detailed discussion.

Similar to the FD, we can rewrite the $\widetilde{\mathrm{FD}}$ as
\begin{talign}
    \widetilde{\mathrm{FD}}_k(p_d||p_\theta)&= \frac{1}{2}\int_{\R^d} \tilde{p}_d(\tilde{x})\norm{s_{\tilde{p}_d}({\tilde{x})} -  s_{\tilde{q}_\theta}(\tilde{x})}_2^2d\tilde{x}\\
    &= \frac{1}{2}\int _{\R^d}\tilde{p}_d(\tilde{x})\left( s^2_{\tilde{q}_\theta}(\tilde{x})+2 \nabla _{\tilde{x}}s_{\tilde{q}_\theta}(\tilde{x}) \right)d\tilde{x} +const.,
\end{talign}
where the constant terms are independent of the model parameters. For an energy-based model $q_\theta(x)=e^{-f_\theta(x)}/Z(\theta)$, the spread model $\tilde{q}_\theta(\tilde{x})=\frac{1}{Z(\theta)\sqrt{2\pi\sigma^2}}\int e^{-f_\theta(x)-\frac{1}{2\sigma^2}(\tilde{x}-x)^2} dx$ has an intractable score. Additionally, unlike the mixture construction, if we directly assume $\tilde{q}_\theta(\tilde{x})=e^{-f_\theta(\tilde{x})}/Z(\theta)$, the underlying `correct' model $q_\theta(x)$ can not be recovered from $\tilde{q}_\theta(\tilde{x})$ even if we know $Z(\theta)$. Therefore, the $\widetilde{\mathrm{FD}}$ is not directly applicable in this case.

\end{document}